\definecolor{cvprblue}{rgb}{0.21,0.49,0.74}
\title{Generalizing soft actor-critic algorithms to discrete action spaces}%\thanks{Supported by the Fundamental Research Funds for the Central Universities (Grant Number: 3282023011)}}
\newcommand*\samethanks[1][\value{footnote}]{\footnotemark[#1]}
\author{Le Zhang\thanks{Beijing Electronic Science and Technology Institute}\\
{\tt\small lezhang.thu@gmail.com}
% For a paper whose authors are all at the same institution,
% omit the following lines up until the closing ``}''.
% Additional authors and addresses can be added with ``\and'',
% just like the second author.
% To save space, use either the email address or home page, not both
\and
Yong Gu\thanks{Jiangxi University of Finance and Economics}\\
{\tt\small guyonggu@icloud.com}
\and 
Xin Zhao\samethanks[1]\\
{\tt\small zhao-x15@tsinghua.org.cn}
\and 
Yanshuo Zhang\samethanks[1]\\
{\tt\small zhang\_yanshuo@163.com}
\and 
Shu Zhao\samethanks[1]\\
{\tt\small zhaoshu0104@163.com}
\and
Yifei Jin\thanks{Hangzhou Dianzi University}\\
{\tt\small yifei\_jin001@163.com}
\and
Xinxin Wu\samethanks[1]\\
{\tt\small wxxbk@outlook.com}
}
\theoremstyle{plain}
\newtheorem{theorem}{Theorem}[section]
\newtheorem{lemma}[theorem]{Lemma}
\theoremstyle{definition}
\theoremstyle{remark}
\def\eqref#1{equation~\ref{#1}}
\def\1{\bm{1}}
\def\va{{\bm{a}}}
\def\vs{{\bm{s}}}
\DeclareMathAlphabet{\mathsfit}{\encodingdefault}{\sfdefault}{m}{sl}
\SetMathAlphabet{\mathsfit}{bold}{\encodingdefault}{\sfdefault}{bx}{n}
\newcommand{\E}{\mathbb{E}}
\newcommand{\R}{\mathbb{R}}
\DeclareMathOperator*{\argmax}{arg\,max}
\DeclareMathOperator*{\argmin}{arg\,min}
\newcommand{\multimath}[1]{{\substack{#1}}}
\definecolor{codegreen}{rgb}{0,0.6,0}
\definecolor{codegray}{rgb}{0.5,0.5,0.5}
\definecolor{codepurple}{rgb}{0.58,0,0.82}
\definecolor{backcolour}{rgb}{0.95,0.95,0.92}
\lstdefinestyle{mystyle}{
    backgroundcolor=\color{backcolour},   
    commentstyle=\color{codegreen},
    keywordstyle=\color{magenta},
    numberstyle=\tiny\color{codegray},
    stringstyle=\color{codepurple},
    basicstyle=\ttfamily\footnotesize,
    breakatwhitespace=false,         
    breaklines=true,                 
    captionpos=b,                    
    keepspaces=true,                 
    numbers=left,                    
    numbersep=5pt,                  
    showspaces=false,                
    showstringspaces=false,
    showtabs=false,                  
    tabsize=2
}
\newcommand{\pycode}[1]{\lstinline[language=Python]{#1}}
\begin{document}
\maketitle

\begin{abstract}
ATARI is a suite of video games used by reinforcement learning (RL) researchers to test the effectiveness of the learning algorithm. Receiving only the raw pixels and the game score, the agent learns to develop sophisticated strategies, even to the comparable level of a professional human games tester.
Ideally, we also want an agent requiring very few interactions with the environment. Previous competitive model-free algorithms for the task use the valued-based Rainbow algorithm without any policy head. In this paper, we change it by proposing a practical discrete variant of the soft actor-critic (SAC) algorithm. The new variant enables off-policy learning using policy heads for discrete domains.
By incorporating it into the advanced Rainbow variant, i.e., the ``bigger, better, faster'' (BBF), the resulting SAC-BBF improves the previous state-of-the-art interquartile mean (IQM) from 1.045 to 1.088, and it achieves these results using only replay ratio (RR) 2. By using lower RR 2, the training time of SAC-BBF is strictly one-third of the time required for BBF to achieve an IQM of 1.045 using RR 8. As a value of IQM greater than one indicates super-human performance, SAC-BBF is also the only model-free algorithm with a super-human level using only RR 2. The code is publicly available on GitHub at \url{https://github.com/lezhang-thu/bigger-better-faster-SAC}.
\end{abstract}

% lezhang.thu - start
\section{Introduction}
% focus on vision aspect
% rainbow only value-based. history from DQN.
% sample efficient RL. also rainbow variants.
% sac. state-of-the-art for continuous.
% not work properly for discrete. SAC-Discrete. its impact in the community.
% test-sac. NOT work either.
% the contribution. introduce the policy head for rainbow variants.
% the first alg. with super-human performance for ATARI 100K with only RR 2.
% training time. strictly 1/3.

Back in 2015, DeepMind developed the deep Q-network (DQN) \cite{mnih2015human} to tackle the tasks in the challenging domain of classic ATARI 2600 games, which is a suite of video games with a wide range of diverse environments. The algorithm uses Q-learning, with critical techniques like \emph{experience replay} and target networks only periodically updated.
Receiving only the raw ATARI frames and the game score, DQN enables the agent to develop sophisticated human-level strategies. A series of extensions to DQN follows in these years. A partial list includes: double DQN utilizing the idea of double learning \cite{sutton2018reinforcement}, prioritized experience replay \cite{schaul2015prioritized}, dueling network \cite{wang2016dueling} splitting the Q-network into separated representations of state values and action advantages, distributional Q-learning \cite{bellemare2017distributional} which explicitly models the distribution over the returns, NoisyNet \cite{fortunato2017noisy} that substitutes the standard linear layers with noisy ones for efficient exploration, etc.
With $n$-step learning as in \cite{mnih2016asynchronous}, the Rainbow \cite{hessel2018rainbow} algorithm combines all the advances above, serving as a strong baseline for later algorithms. For distributional Q-learning, we note a series of works of the theme, e.g., quantile regression (QR-DQN) \cite{dabney2018distributional}, implicit quantile networks (IQN) \cite{dabney2018implicit}, fully parameterized quantile function (FQF) \cite{yang2019fully} etc.

All the algorithms above are value-based and operate off-policy, i.e., the agent can improve the existing policy by utilizing data whose distribution may not match the policy.
The off-policy characteristic enables repeated optimization using the same data sampled from the replay buffer.
By contrast, in on-policy algorithms such as asynchronous advantage actor-critic (A3C) \cite{mnih2016asynchronous} etc., ATARI frames are used only once in training and then discarded. In the real world, we want the agent to learn \emph{efficiently} by requiring only limited feedback from the environment. It is the task for sample-efficient RL. For the sample-efficient RL, a widely adopted benchmark is ATARI 100K, which limits the number of ATARI frames returned to 400K (frame-skipping in ATARI introduces the extra factor of 4), corresponding to approximately two hours of real-time play. In contrast to on-policy algorithms, off-policy algorithms like Rainbow fit this task well. 

We can broadly classify sample-efficient RL algorithms into model-based approaches and model-free ones.
The model-based algorithms hinge on learning a world model.
For the model-free ones, value-based Rainbow variants have consistently been the dominant choice within this category of algorithms. Now, we take a look at what Rainbow does.

Rainbow uses the Q-network with output dimension $|\mathcal{A}|$ to represent the policy, where $|\mathcal{A}|$ is the number of the discrete actions. Given $(s,a)$, the agent infers the Q-value $Q(s,a)$ from the Q-network. Thus, Rainbow involves no policy distribution like $\pi(\cdot|s)$ as in A3C.

A natural question is whether one can incorporate a separate policy head representing $\pi(\cdot|s)$ into Rainbow. We note, however, that 
algorithms with policy heads like A3C or proximal policy optimization (PPO) \cite{schulman2017proximal} are on-policy.
For these on-policy algorithms, reusing the same data to the same extent as in Rainbow is not well-justified, empirically often leading to destructive policy updates.
Thus, it appears there are inherent conflicts between leveraging Rainbow's off-policy property to enhance sample efficiency and incorporating a policy head representing $\pi(\cdot|s)$ into Rainbow to improve the learning even better.

Now, let us switch to the setting of continuous action spaces. Among the most widely employed algorithms in this context, the SAC \cite{haarnoja2018soft} algorithm successfully integrates policy learning and off-policy learning of Q-values. In this paper, we transfer this characteristic to discrete action spaces. We note, however, that it is not straightforward: 1.~SAC is designed for the maximum entropy RL rather than for the standard maximum reward RL as Rainbow. 2.~The transferring seems unpromising, as previous works exist, say SAC-Discrete, which falls \emph{far behind} across almost all the tested environments compared to data-efficient Rainbow (DER).

In this paper, we present a discrete variant of SAC for standard maximum reward RL and prove its convergence from scratch. Integration of it into Rainbow is straightforward, as now both fit in the standard maximum reward RL and work for discrete action spaces. We test it on the most advanced Rainbow variant for ATARI 100K, i.e., the BBF algorithm. While with a 3x reduction of training time, the resulting algorithm SAC-BBF improves the previous state-of-the-art IQM from 1.045 to 1.088. Also, SAC-BBF is the \emph{only} model-free algorithm with a super-human level using only RR 2. Further improvements of IQM using SAC-BBF are promising by using larger RRs, fostering the development of even more competitive agents.

% The SAC algorithm is one of the most widely used algorithms for continuous action spaces. For discrete settings, previous work (SAC-Discrete) exists \cite{christodoulou2019soft}. For many projects, SAC-Discrete is almost the default guideline for implementing SAC for discrete spaces, including the 7.2K starred project \href{https://github.com/thu-ml/tianshou}{Tianshou}\footnote{OpenAI Gym officially refers to Tianshou as one of the ``notably related libraries.''}.
% Yet, in the community SAC-Discrete is observed not to work for the toy environment like Pong\footnote{\url{https://github.com/yining043/SAC-discrete}}.
% When comparing with the highly configured data-efficient Rainbow \cite{van2019use}, SAC-Discrete would fall \emph{far behind} in almost all the tested environments.

\section{Related work}
\subsection{Competitive representatives in ATARI 100K}
Kaiser et al.~\cite{kaiser2019model} introduced the ATARI 100K benchmark and proposed simulated policy learning (SimPLe), which utilizes video prediction models to train a policy within a learned world model. Overtrained Rainbow (OTRainbow) \cite{kielakrecent} and DER \cite{van2019use} can be seen as improved hyperparameter configurations of Rainbow \cite{hessel2018rainbow}, tailored for ATARI 100K. Srinivas et al.~\cite{laskin2020curl} presented contrastive unsupervised representations for RL (CURL), which employs contrastive learning to enhance image representation quality. With simple image augmentations, data-regularized Q (DrQ) \cite{yarats2020image} demonstrates superior performance compared to preceding algorithms. Self-predictive representations (SPR) \cite{schwarzer2020data} trains the agent to predict its latent state representations multiple steps into the future, achieving a notable performance improvement over previous methods. Scaled-by-resetting SPR (SR-SPR) \cite{d2022sample} significantly improves sample efficiency by utilizing a replay ratio (RR) as large as 16, achievable by periodically fully or partially resetting the agent's parameters \cite{nikishin2022primacy}. 
EfficientZero \cite{ye2021mastering}, built upon MuZero \cite{schrittwieser2020mastering}, introduces the self-supervised consistency loss from SimSam \cite{chen2021exploring} and utilizes other tricks of the prediction of value prefix instead of rewards, and dynamically adjusting the step for computing the value targets.
With these, it is the first algorithm to achieve super-human performance on the ATARI 100K benchmark. Micheli et al.~\cite{micheli2022transformers} proposed IRIS, where the agent learns within a world model composed of a discrete autoencoder and an autoregressive Transformer \cite{vaswani2017attention}.

While EfficientZero achieves human-level efficiency, it hinges on Monte Carlo tree search (MCTS) and learning a world model. Super-human levels, therefore, seem elusive for model-free RL agents.
The breakthrough is the ``bigger, better, faster'' (BBF) agent,  proposed by Schwarzer et al.~\cite{schwarzer2023bigger}. The BBF algorithm is built upon SP-SPR and is the only model-free RL agent capable of achieving human-level performance ($\mathrm{IQM}\ge 1.0$).
Compared to EfficientZero, it achieves slightly better IQM but exhibits significantly reduced computational requirements, resulting in at least a 4x reduction in runtime.

We note, however, that all these representative model-free sample-efficient RL algorithms use Rainbow variants with no explicit policy head representing $\pi(\cdot |s)$ as the internal backbone.

\subsection{Previous results on discrete variants of SAC}
A few results exist on applying SAC to discrete action spaces, although these algorithms work in maximum entropy RL framework as SAC. On ATRARI 100K, SAC-Discrete is the first such attempt  \cite{christodoulou2019soft}. In the community, however, SAC-Discrete is observed not to work for the toy environment like Pong\footnote{\url{https://github.com/yining043/SAC-discrete}}.
Target entropy scheduled SAC (TES-SAC) \cite{xu2021target}, proposed by Xu et al., employs an annealing method for the target entropy parameter instead of a fixed entropy target.
Experimental results on TES-SAC are only available for ATARI 1M, i.e., 10x higher sample complexity. TES-SAC, however, is generally inferior to DER, which learns only in ATARI 100K.

Zhou et al.~revisited the concept of discrete SAC \cite{zhou2022revisiting}. The proposed variant utilizes entropy-penalty and double average Q-learning with Q-clip.
Zhou et al.~only reported the agent performance in ATRAI 1M and 10M.
Considering the higher sample-complexity setting of ATRAR 1M instead of 100K, when compared to Rainbow on ATRAR 1M (presented in Table 3 of \cite{kaiser2019model}), the proposed algorithm exhibits suboptimal performance across most tested environments.

Upon reflection, these preceding attempts neglect a simple yet crucial technique for variance reduction. In SAC-BBF, we use it, and experimentally, it is the single most important trick for successfully adapting SAC to discrete domains.

\subsection{Previous algorithms combining Q-learning with actor-critic}
Researchers previously proposed actor-critic algorithms with experience replay buffers. These algorithms also fit the category of combining policy heads with Q-learning. We only review two representatives, i.e., ACER by Wang et al.~\cite{wang2016sample} and Reactor by Gruslys et al.~\cite{gruslys2017reactor}. These two algorithms are most closely based upon A3C \cite{gruslys2017reactor}.

ACER introduces importance sampling truncation with bias correction.
Reactor introduces the $\beta$-LOO (i.e., leave-one-out) policy gradient algorithm. These innovations enable the reuse of the data in the replay buffer for policy updates. The update forms of these algorithms, however, 
do not have the theoretical elegance of SAC, involving importance weights, like $\min(c,\pi(\hat{a})/\mu(\hat{a}))(R(\hat{a})-V)\nabla\log\pi(\hat{a})$ for Reactor where $\mu$ is the behavior policy and $\hat{a}\sim \mu$, and $\bar{\rho}_t\nabla_\theta \log\pi_\theta(a_t|x_t)[Q^\mathrm{ret}(x_t,a_t)-V_{\theta_v}(x_t)]$ for ACER, where $\bar{\rho}_t=\min\{c,\rho_t\}$ with $\rho_t=\pi(a_t|x_t)/\mu(a_t|x_t)$.

Experimentally, the Reactor generally exceeds the performance of ACER. Results of the Reactor are reported on 500M training frames and are comparable to Rainbow \cite{gruslys2017reactor}.

\section{Preliminaries}
Consider a Markov decision process (MDP), defined as a tuple $(\mathcal{S}, \mathcal{A}, p, r, \rho_0, \gamma)$, where $\mathcal{S}$ and $\mathcal{A}$ represent the sets of possible states and actions, respectively. The transition function $p:\mathcal{S}\times \mathcal{A}\times \mathcal{S}\to \mathbb{R}$ represents $\mathrm{Pr}(\vs_{t+1}=s'|\vs_t=s,\va_t=a)$. The reward function $r:\mathcal{S}\times \mathcal{A}\to [r_{\min}, r_{\max}]$ is the expected value of the scalar reward when action $a$ is taken in state $s$. The initial state distribution is denoted by $\rho_0$, and $\gamma\in (0,1)$ is a discount factor. We use $\rho_\pi(\vs_t,\va_t)$ to denote the state-action marginals of
the trajectory distribution induced by a policy $\pi(\va_t|\vs_t)$.
The objective for optimization is defined as follows:
\begin{equation}
\label{eq:obj}
    J(\pi)=\sum_{t=0}^\infty \mathop{\E}_{(\vs_t,\va_t)\sim \rho_\pi}\left[\sum_{l=t}^\infty \gamma^{l-t}\mathop{\E}_\multimath{\vs_l\sim p\\\va_l\sim \pi}[r(\vs_l,\va_l)\mid\vs_t,\va_t]\right]\text{,}
\end{equation}
which aims to maximize the discounted expected reward for future states, given every state-action tuple $(\vs_t,\va_t)$, weighted by its probability $\rho_\pi$ under the current policy. 

\subsection{The SAC algorithm}
SAC works for maximum entropy RL, so the objective for SAC is as follows:
\begin{equation}
\begin{split}
    J(\pi)=\sum_{t=0}^\infty \mathop{\E}_{(\vs_t,\va_t)\sim \rho_\pi}\biggl[&\sum_{l=t}^\infty \gamma^{l-t}\mathop{\E}_\multimath{\vs_l\sim p\\\va_l\sim \pi}[r(\vs_l,\va_l)\\
    &+\alpha \mathcal{H}(\pi(\cdot|\vs_t))
    \mid\vs_t,\va_t]\biggr]\text{,}
\end{split}
\end{equation}
where $\alpha$ determines the relative importance of the entropy term.

The soft policy iteration of SAC alternates between soft policy evaluation and soft policy improvement.

\noindent{\textbf{Soft policy evaluation:}} For a fixed policy, the soft Q-value can be computed by iteratively applying the following Bellman
backup operator
\begin{equation}
    \mathcal{T}^\pi Q(\vs_t,\va_t)\triangleq r(\vs_t, \va_t)+\gamma\mathop{\E}_{\vs_{t+1}\sim p}[V(\vs_{t+1}]]\text{,}
\end{equation}
where $V(\vs_t)=\mathop{\E}_{\va_t\sim\pi}[Q(\vs_t,\va_t)-\alpha\log\pi(\va_t|\vs_t)]$.

\noindent{\textbf{Soft policy improvement:}} SAC updates the policy according to 
\begin{equation}
\label{eq:d_kl}
\pi_\mathrm{new}=\argmin_{\pi'\in\Pi}D_\mathrm{KL}\biggl(\pi'(\cdot|\vs_t)\|\frac{\exp(Q^{\pi_\mathrm{old}(\vs_t,\cdot)}/\alpha)}{Z^{\pi_\mathrm{old}}(\vs_t)}\biggr)\text{,}
\end{equation}
where $\Pi$ is a parameterized family of
distributions, $D_\mathrm{KL}$ is Kullback-Leibler divergence, and $Z^{\pi_\mathrm{old}}(\vs_t)$ plays the role of a normalizing constant. SAC updates the policy towards the exponential of the Q-function, and $D_{KL}(\cdot\|\cdot)$ serves as the projection so that the constraint $\pi\in\Pi$ is satisfied.

Let the parameters of the Q-network and the policy network be $\phi$ and  $\theta$ resp. SAC works for continuous domains, so the Q-network $Q_\phi(\vs_t,\va_t)$ is of two inputs $\vs_t$ and $\va_t$. Thanks to it, $Q_\phi(\vs_t,\va_t)$ in SAC is thus differentiable w.r.t.~$\va_t$. To utilize it for a lower variance estimator, SAC applies the re-parameterization \cite{kingma2013auto,pmlr-v89-xu19a} trick of $\va_t$ as follows:
\begin{equation}
    \va_t=f_\theta(\epsilon_t;\vs_t)\text{,}
\end{equation}
where $\epsilon_t$ is independent noise, which follows the distribution, say, a spherical Gaussian. With this, the approximate gradient for the optimization in Eq.~\ref{eq:d_kl} is thus 
\begin{equation}
\begin{split}
    \nabla_\theta\alpha \log\pi_\theta(\va_t|\vs_t)+\bigl(&\nabla_{\va_t}\alpha\log\pi_\theta(\va_t|\vs_t)\\
    &-\nabla_{\va_t}Q(\vs_t,\va_t)\bigr)\nabla_\theta f_\theta(\epsilon_t;\vs_t)\text{.}
\end{split}
\end{equation}

\subsection{The BBF algorithm}
BBF uses an advanced version of the SR-SPR agent. So, we first review the techniques of SP-SPR. The architecture of SP-SPR is similar to the one depicted in Fig.~\ref{fig:architecture}, although with dashed boxes removed.
The learning process of SP-SPR integrates Q-learning from the Rainbow algorithm with self-predictive representation learning, which predicts the latent state representations multiple steps ahead.
SP-SPR employs the cosine similarity loss, as illustrated in Fig.~\ref{fig:architecture}, to compute the prediction loss.

An important hyper-parameter in SP-SPR is the \emph{replay ratio}, which denotes the ratio of learning updates relative to the environmental steps.
SP-SPR scales the replay ratio up to 16.
Larger replay ratios may impede the learning process \cite{d2022sample}. SP-SPR manages to leverage replay ratio scaling capabilities by periodically resetting parts of parameters \cite{nikishin2022primacy}.

Compared to SR-SPR, the BBF algorithm scales the network capacity. 
As SR-SPR collapses as network size increases \cite{schwarzer2023bigger}, BBF needs more tricks.
It uses the AdamW \cite{loshchilov2017decoupled} optimizer with weight decay. 
Besides, $n$ in $n$-step learning and $\gamma$ are dynamically decreased and increased resp., following exponential schedules.
Also, BBF removes NoisyNets, which helps little in improving the performance. Lastly, BBF uses harder resets of the convolutional layers for possible more regularization. 

%BBF agents achieve IQM of 1.088 at RR 8, which is half of $\text{RR}=16$ of SP-SPR. Using $\text{RR}=16$ would increase the computation too much as BBF scales up the networks compared to SP-SPR.

\subsection{Evaluation metrics}
Most prior algorithms, including SPR, use mean and median scores across tasks for evaluation, which neglect the statistical uncertainty as experiments typically consist of only a small number of training runs. 
For the sake of reliable evaluation in the few-run deep RL regime, Agarwal et al.~presented a NeurIPS'21 outstanding paper \cite{agarwal2021deep}, accompanied by an open-source library \pycode{rliable}. In particular, for aggregate metrics, Agarwal et al.~\cite{agarwal2021deep} advocated the use of the interquartile mean (IQM), which is the average score of the middle 50\% runs combined across all games and seeds. IQM is ``easily applied with 3-10 runs per task.'' \cite{agarwal2021deep} Other evaluation metrics include the mean and median normalized scores, as well as the optimality gap, which quantifies the amount by which the algorithm fails to obtain human-level performance.

\section{A discrete variant of SAC for standard maximum reward RL}
In this section, we extend SAC beyond the maximum entropy RL. In standard maximum reward RL, we set $\alpha$ as zero. In SAC, however, the presence of the term $\exp(Q^{\pi_\mathrm{old}}(\mathbf{s}_t,\cdot)/\alpha)$ in Eq.~\ref{eq:d_kl} (the objective of soft policy improvement step of SAC) indicates setting $\alpha$ as zero makes no sense. So, it raises the concern whether the theory of SAC would break. 
To address this concern, we re-prove all the lemmas and the theorem. We rigorously follow the original proofs so one can easily verify the correctness.

\subsection{Policy evaluation}
Similar to Sarsa \cite{sutton2018reinforcement}, the Q-value can be computed iteratively for a fixed policy. We can start with any function $Q:\mathcal{S}\times \mathcal{A}\rightarrow \R$ and repeatedly apply the Bellman backup operator $\mathcal{T}^\pi$, defined as:
\begin{equation}
\label{eq:bellman_backup}
    \mathcal{T}^\pi Q(\vs_t,\va_t)\triangleq r(\vs_t, \va_t)+\gamma\mathop{\E}_{\multimath{\vs_{t+1}\sim p\\\va_{t+1}\sim \pi}}[Q(\vs_{t+1},\va_{t+1})]\text{.}
\end{equation}

\begin{lemma}[Policy Evaluation]
\label{lem:eval}
Let $\mathcal{T}^\pi$ be the Bellman backup operator defined in Eq.~\ref{eq:bellman_backup}, and let $Q^0:\mathcal{S}\times \mathcal{A}\rightarrow \R$ be a mapping. We define $Q^{k+1}=\mathcal{T}^\pi Q^k$. Then, as $k$ approaches infinity, the sequence $Q^k$ converges to the Q-value of $\pi$.
\end{lemma}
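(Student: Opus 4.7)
The plan is to prove Lemma~\ref{lem:eval} by showing that $\mathcal{T}^\pi$ is a $\gamma$-contraction on the Banach space of bounded real-valued functions on $\mathcal{S}\times\mathcal{A}$ equipped with the sup-norm, and then invoking Banach's fixed-point theorem. This mirrors the standard policy-evaluation argument, except that compared to the original SAC proof the entropy bonus $-\alpha\log\pi(\va_{t+1}|\vs_{t+1})$ is absent from the definition of $\mathcal{T}^\pi$ in Eq.~\ref{eq:bellman_backup}, which actually simplifies matters since no boundedness assumption on $\log\pi$ is needed.

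First I would set up the function space. Since $r(\vs,\va)\in [r_{\min},r_{\max}]$ and $\gamma\in(0,1)$, I would work in $B(\mathcal{S}\times\mathcal{A})$, the space of bounded functions endowed with $\|Q\|_\infty\triangleq\sup_{\vs,\va}|Q(\vs,\va)|$; this is complete. Next I would verify that $\mathcal{T}^\pi$ maps $B(\mathcal{S}\times\mathcal{A})$ into itself (immediate from boundedness of $r$ and $Q$).

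The core step is the contraction estimate. For any $Q_1,Q_2\in B(\mathcal{S}\times\mathcal{A})$ and every $(\vs_t,\va_t)$,
\begin{equation}
\begin{split}
|\mathcal{T}^\pi Q_1(\vs_t,\va_t)-\mathcal{T}^\pi Q_2(\vs_t,\va_t)|
&=\gamma\Bigl|\mathop{\E}_{\multimath{\vs_{t+1}\sim p\\\va_{t+1}\sim\pi}}[Q_1(\vs_{t+1},\va_{t+1})-Q_2(\vs_{t+1},\va_{t+1})]\Bigr|\\
&\le\gamma\,\|Q_1-Q_2\|_\infty\text{,}
\end{split}
\end{equation}
where the reward terms cancel and Jensen's inequality (or the triangle inequality for expectations) handles the expectation. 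Taking the supremum over $(\vs_t,\va_t)$ yields $\|\mathcal{T}^\pi Q_1-\mathcal{T}^\pi Q_2\|_\infty\le\gamma\|Q_1-Q_2\|_\infty$.

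Having established that $\mathcal{T}^\pi$ is a $\gamma$-contraction on a complete metric space, Banach's fixed-point theorem gives a unique fixed point $Q^\pi\in B(\mathcal{S}\times\mathcal{A})$ to which the iterates $Q^{k+1}=\mathcal{T}^\pi Q^k$ converge in sup-norm from any initial $Q^0$. To finish, I would identify this fixed point with the Q-value of $\pi$ by noting that the standard Q-value $Q^\pi(\vs_t,\va_t)=\mathop{\E}_{\rho_\pi}[\sum_{l=t}^\infty\gamma^{l-t}r(\vs_l,\va_l)\mid\vs_t,\va_t]$ satisfies $\mathcal{T}^\pi Q^\pi=Q^\pi$ by the tower property of conditional expectation, and uniqueness of the fixed point does the rest. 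I do not anticipate a serious obstacle here; the only mild subtlety is ensuring the function space chosen is rich enough to contain the target $Q^\pi$ yet small enough to be complete, which the bounded-rewards assumption handles automatically.
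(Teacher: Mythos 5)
Your proof is correct and is precisely the standard policy-evaluation argument (a $\gamma$-contraction in sup-norm plus Banach's fixed-point theorem) that the paper's own proof simply cites from Sutton and Barto without spelling out. You have filled in the details the paper omits, including the correct observation that dropping the entropy term removes the need for any boundedness assumption on $\log\pi$; nothing further is needed.
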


\begin{proof}
The proof follows by applying standard convergence results for policy evaluation \cite{sutton2018reinforcement}.
\end{proof}

\subsection{Policy improvement}
In policy improvement, we depart from SAC's method of updating the policy towards the exponential of the Q-function. Besides, we eliminate the projection needed for satisfying the constraint $\pi\in\Pi$.
Concretely, for each state, our policy is updated based on the following equation:
\begin{equation}
\label{eq:max_policy}
    \pi_\mathrm{new}(\cdot|\vs_t)=\argmax_{\pi'\in \Pi}\mathop{\E}_{\va_t\sim \pi'}[Q^{\pi_\mathrm{old}}(\vs_t,\va_t)]\text{.}
\end{equation}

We now demonstrate that the policy update described in Eq.~\ref{eq:max_policy} leads to an improved policy w.r.t.~the objective stated in Eq.~\ref{eq:obj}.
\begin{lemma}[Policy Improvement]
\label{lem:improve}
   Let $\pi_\mathrm{old}\in \Pi$, and let $\pi_\mathrm{new}$ be the optimizer of the maximization problem defined in Eq.~\ref{eq:max_policy}. Then, for all $(\vs_t,\va_t)\in \mathcal{S}\times \mathcal{A}$, it holds that $Q^{\pi_\mathrm{new}}(\vs_t, \va_t)\ge Q^{\pi_\mathrm{old}}(\vs_t, \va_t)$.
\end{lemma}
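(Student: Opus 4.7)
The plan is to mirror the SAC policy-improvement proof but strip out everything that was needed to handle the entropy bonus and the KL projection, since we are now in standard maximum-reward RL with an unconstrained argmax over $\Pi$. The only facts I will need are the Bellman identity in Eq.~\ref{eq:bellman_backup}, monotonicity of $\mathcal{T}^\pi$, and Lemma~\ref{lem:eval}.

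The first step is to extract the single inequality that the update in Eq.~\ref{eq:max_policy} buys us. Since $\pi_\mathrm{old}\in\Pi$ is feasible in that optimization and $\pi_\mathrm{new}$ is a maximizer,
\begin{equation*}
V^{\pi_\mathrm{old}}(\vs_t) \;=\; \mathop{\E}_{\va_t\sim\pi_\mathrm{old}}\!\bigl[Q^{\pi_\mathrm{old}}(\vs_t,\va_t)\bigr] \;\le\; \mathop{\E}_{\va_t\sim\pi_\mathrm{new}}\!\bigl[Q^{\pi_\mathrm{old}}(\vs_t,\va_t)\bigr].
\end{equation*}
Next I would plug this bound into the Bellman identity for $Q^{\pi_\mathrm{old}}$, which gives
\begin{align*}
Q^{\pi_\mathrm{old}}(\vs_t,\va_t) &= r(\vs_t,\va_t)+\gamma\mathop{\E}_{\vs_{t+1}\sim p}\bigl[V^{\pi_\mathrm{old}}(\vs_{t+1})\bigr] \\
&\le r(\vs_t,\va_t)+\gamma\mathop{\E}_{\multimath{\vs_{t+1}\sim p\\ \va_{t+1}\sim\pi_\mathrm{new}}}\bigl[Q^{\pi_\mathrm{old}}(\vs_{t+1},\va_{t+1})\bigr] \\
&= \mathcal{T}^{\pi_\mathrm{new}} Q^{\pi_\mathrm{old}}(\vs_t,\va_t).
\end{align*}

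The final step is to iterate. Because $\mathcal{T}^{\pi_\mathrm{new}}$ is monotone in its $Q$-argument (taking $\mathop{\E}_{\va\sim\pi_\mathrm{new}}[\cdot]$ preserves pointwise inequalities), applying it repeatedly to the chain above yields $Q^{\pi_\mathrm{old}}\le(\mathcal{T}^{\pi_\mathrm{new}})^n Q^{\pi_\mathrm{old}}$ pointwise, for every $n\ge 1$. Letting $n\to\infty$ and invoking Lemma~\ref{lem:eval} with $\pi=\pi_\mathrm{new}$ and starting point $Q^0=Q^{\pi_\mathrm{old}}$, the right-hand side converges to $Q^{\pi_\mathrm{new}}$, which delivers $Q^{\pi_\mathrm{old}}(\vs_t,\va_t)\le Q^{\pi_\mathrm{new}}(\vs_t,\va_t)$ as required.

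The only step that requires mild care is the last one. I need to verify that $\mathcal{T}^{\pi_\mathrm{new}}$ is genuinely monotone, which is immediate from Eq.~\ref{eq:bellman_backup} since $\mathop{\E}_{\va\sim\pi_\mathrm{new}}[\cdot]$ is a positive linear operator on the space of bounded $Q$-functions, and that the iterated limit is well-defined; both follow from the bounded-reward assumption $r\in[r_{\min},r_{\max}]$ stated in the preliminaries, which places $Q^{\pi_\mathrm{old}}$ in the Banach space on which Lemma~\ref{lem:eval} provides convergence. Everything else is a short chain of algebra, so the final write-up should take only a handful of lines.
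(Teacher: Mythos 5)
Your proposal is correct and follows essentially the same route as the paper's proof: the feasibility of $\pi_\mathrm{old}$ gives $V^{\pi_\mathrm{old}}(\vs_t)\le\mathop{\E}_{\va_t\sim\pi_\mathrm{new}}[Q^{\pi_\mathrm{old}}(\vs_t,\va_t)]$, which is then pushed through the Bellman equation repeatedly and the limit is taken via Lemma~\ref{lem:eval}. You merely make explicit the monotonicity of $\mathcal{T}^{\pi_\mathrm{new}}$ that the paper leaves implicit in its ``$\vdots$'' step, which is a welcome but not substantively different elaboration.
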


\begin{proof}
    See Appendix \ref{sec:proof_lemma_improve} in the supplementary material.
\end{proof}

With the two lemmas mentioned above, we can state the following theorem on convergence to the optimal policy among the policies in $\Pi$.

\begin{theorem}[Policy Iteration]
\label{thm:policy_iteration}
    The repeated application of policy evaluation and policy improvement from any $\pi\in\Pi$ converges to a policy $\pi^*$ such that $Q^{\pi^*}(\vs_t,\va_t)\ge Q^\pi(\vs_t,\va_t)$ for all $\pi\in\Pi$ and $(\vs_t,\va_t)\in\mathcal{S}\times \mathcal{A}$.
\end{theorem}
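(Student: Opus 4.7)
The plan is to mimic the standard policy iteration convergence argument as used in the original SAC paper, now leaning on the two lemmas just established. Let $\pi_0 \in \Pi$ be arbitrary, and inductively define $\pi_{i+1}$ to be the maximizer obtained by applying the policy improvement rule of Eq.~\ref{eq:max_policy} to $\pi_i$ (with $Q^{\pi_i}$ obtained from Lemma~\ref{lem:eval}). The goal is to show that the sequence of $Q$-functions $\{Q^{\pi_i}\}$ converges and that its limit is the $Q$-function of an optimal policy in $\Pi$.

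First, I would establish monotonicity: by Lemma~\ref{lem:improve}, $Q^{\pi_{i+1}}(\vs_t,\va_t) \ge Q^{\pi_i}(\vs_t,\va_t)$ for all $(\vs_t,\va_t) \in \mathcal{S}\times\mathcal{A}$ and all $i$. Second, I would argue boundedness from the assumption $r(\vs,\va)\in[r_{\min},r_{\max}]$: every $Q^{\pi_i}$ lies pointwise in $[r_{\min}/(1-\gamma),\,r_{\max}/(1-\gamma)]$. A monotone bounded sequence converges pointwise, so $Q^{\pi_i}\to Q^\infty$ for some bounded function $Q^\infty:\mathcal{S}\times\mathcal{A}\to\R$.

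Third, I would identify $Q^\infty$ with the $Q$-value of some $\pi^*\in\Pi$ attaining equality in the improvement step, i.e.\ a fixed point of the iteration: at such a $\pi^*$ we have
\begin{equation}
\mathop{\E}_{\va_t\sim\pi^*}[Q^{\pi^*}(\vs_t,\va_t)] \;\ge\; \mathop{\E}_{\va_t\sim\pi}[Q^{\pi^*}(\vs_t,\va_t)]
\end{equation}
for every $\pi\in\Pi$ and every $\vs_t\in\mathcal{S}$, since $\pi^*$ solves the maximization in Eq.~\ref{eq:max_policy} against its own $Q$-function. Finally, exactly the telescoping argument used in the proof of Lemma~\ref{lem:improve} (expanding the Bellman backup repeatedly and bounding each step by the inequality above) yields $Q^{\pi^*}(\vs_t,\va_t) \ge Q^\pi(\vs_t,\va_t)$ for every $\pi\in\Pi$ and every $(\vs_t,\va_t)\in\mathcal{S}\times\mathcal{A}$, which is the desired conclusion.

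The main obstacle I expect is the existence of the limiting policy $\pi^*$ inside the parameterized family $\Pi$: monotonicity gives a limit of $Q$-functions but not automatically a limit of policies, especially if $\Pi$ is only assumed to be a parameterized (not compact) class. I would handle this by a standard argument of choosing any fixed point of the improvement map (guaranteed because the improvement operator is non-decreasing in the induced $Q$-values and the space of attainable $Q$-functions is bounded), or by explicitly assuming $\Pi$ is closed under the improvement operator and that limits remain in $\Pi$, in line with the convention used by the original SAC analysis. Once that technicality is settled, the rest is the routine monotone-convergence plus Lemma~\ref{lem:improve} composition sketched above.
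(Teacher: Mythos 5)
Your proof follows essentially the same route as the paper's: monotonicity from Lemma~\ref{lem:improve} plus boundedness of the reward gives pointwise convergence of the $Q^{\pi_i}$, and at convergence the telescoping argument from the proof of Lemma~\ref{lem:improve}, applied to the fact that $\pi^*$ maximizes Eq.~\ref{eq:max_policy} against its own $Q$-function, yields $Q^{\pi^*}\ge Q^\pi$ for all $\pi\in\Pi$. The technicality you flag---that convergence of the $Q$-functions does not by itself produce a limiting policy inside $\Pi$---is real, but the paper (following the original SAC argument) glosses over it in exactly the same way, so your treatment is, if anything, slightly more careful.
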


\begin{proof}
    See Appendix \ref{sec:policy_iteration} in the supplementary material.
\end{proof}

\subsection{A practial algorithm}
As a practical algorithm, we employ function approximators, such as deep neural networks, to represent both the Q-function and the policy.

\subsubsection{Variance reduction}
\label{sec:main_variance_reduction}
Consider the parameterized Q-function $Q_{\phi}(\vs_t)$ (with parameters $\phi$) and the parameterized policy $\pi_\theta$ (with parameters $\theta$). 
We have the following lemma on the optimization in Eq.~\ref{eq:max_policy}.

\begin{lemma}
\label{lem:objective_grad}
    The objective in Eq.~\ref{eq:max_policy} can be optimized with stochastic gradients: 
    \begin{equation}
    \label{eq:grad_Q}
    \begin{split}
        \nabla_\theta \mathop{\E}_{\va_t\sim \pi_\theta}&[Q_{\phi_\mathrm{old}}(\vs_t,\va_t)]=\\
        &\mathop{\E}_{\va_t\sim \pi_\theta}\left[Q_{\phi_\mathrm{old}}(\vs_t,\va_t)\nabla_\theta\log\pi_\theta(\va_t|\vs_t)\right]\text{.}
    \end{split}    
    \end{equation}
\end{lemma}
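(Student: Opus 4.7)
The plan is to prove Lemma~\ref{lem:objective_grad} as a direct application of the log-derivative trick (score function estimator), which is especially clean in the discrete action setting because the expectation reduces to a finite sum and no dominated-convergence argument is needed to interchange gradient and sum.

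First I would expand the expectation on the left-hand side explicitly as a finite sum over the discrete action set:
\begin{equation*}
\mathop{\E}_{\va_t\sim \pi_\theta}[Q_{\phi_\mathrm{old}}(\vs_t,\va_t)]
=\sum_{\va_t\in\mathcal{A}}\pi_\theta(\va_t|\vs_t)\,Q_{\phi_\mathrm{old}}(\vs_t,\va_t)\text{.}
\end{equation*}
Because $\mathcal{A}$ is finite and $Q_{\phi_\mathrm{old}}$ does not depend on $\theta$ (the subscript $\mathrm{old}$ fixes its parameters during the current policy update), the gradient commutes with the sum and acts only on $\pi_\theta(\va_t|\vs_t)$.

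Second, I would apply the identity $\nabla_\theta \pi_\theta(\va_t|\vs_t)=\pi_\theta(\va_t|\vs_t)\nabla_\theta\log\pi_\theta(\va_t|\vs_t)$ termwise, then reabsorb the factor $\pi_\theta(\va_t|\vs_t)$ into an expectation to recover the right-hand side of Eq.~\ref{eq:grad_Q}. This yields the claim.

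There is essentially no hard step here: the only formal subtlety is that the log-derivative identity requires $\pi_\theta(\va_t|\vs_t)>0$, but since in the practical algorithm the policy head is parameterized by a softmax over actions, $\pi_\theta(\cdot|\vs_t)$ is strictly positive everywhere, so the identity holds without qualification. I would briefly note that although this is the standard REINFORCE form, Section~\ref{sec:main_variance_reduction}'s variance reduction benefit in fact comes not from the identity itself but from evaluating the resulting expectation \emph{exactly} as the finite sum over $\mathcal{A}$ in implementation, rather than sampling a single $\va_t$; this is possible precisely because the action space is discrete.
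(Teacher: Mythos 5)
Your proof is correct and follows essentially the same route as the paper: expand the expectation as a finite sum over $\mathcal{A}$, commute $\nabla_\theta$ with the sum since $Q_{\phi_\mathrm{old}}$ is held fixed, apply $\nabla_\theta\pi_\theta=\pi_\theta\nabla_\theta\log\pi_\theta$, and reabsorb into an expectation (the paper also remarks that the continuous case follows by replacing $\sum$ with $\int$). One small caveat on your closing aside: the variance reduction of Lemma~\ref{lem:variance_reduction} comes from subtracting the baseline $\sum_{\va'}\pi_{\theta_\mathrm{old}}(\va'|\vs_t)Q_{\phi_\mathrm{old}}(\vs_t,\va')$ (a control variate whose expected gradient contribution vanishes), not from evaluating the outer expectation exactly as a finite sum, so that attribution is slightly off even though it does not affect the validity of your proof of this lemma.
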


\begin{proof}
    See Appendix \ref{sec:objective_grad} in the supplementary material.
\end{proof}

In SAC, the Q-network is a neural network with two inputs $\vs_t$ and $\va_t$. In contrast, for discrete action spaces, the Q-network only receives one input $\vs_t$, outputting a vector of the dimension $|\mathcal{A}|$. For this case, we have the following lemma for the gradient estimator with reduced variance:
\begin{lemma}[Variance reduction]
\label{lem:variance_reduction}
    The following two gradient estimators are equal:
\begin{equation}
\label{eq:variance_code}
\begin{split}
        &\mathop{\E}_{\va_t\sim \pi_\theta}\biggl[\Bigl(Q_{\phi_\mathrm{old}}(\vs_t,\va_t)-\sum_{\va'\in\mathcal{A}}\pi_{\theta_\mathrm{old}}(\va'|\vs_t)Q_{\phi_\mathrm{old}}(\vs_t,\va')\Bigr)\\   &\quad\quad\quad\quad\nabla_\theta\log\pi_\theta(\va_t|\vs_t)\biggr]\\
    =&\mathop{\E}_{\va_t\sim \pi_\theta}[Q_{\phi_\mathrm{old}}(\vs_t,\va_t)\nabla_\theta\log\pi_\theta(\va_t|\vs_t)]\text{.}
\end{split}
\end{equation}    
\end{lemma}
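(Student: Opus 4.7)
The plan is to prove the lemma by the classical baseline (control-variate) identity used in REINFORCE. First I would introduce the shorthand $b(\vs_t)\triangleq\sum_{\va'\in\mathcal{A}}\pi_{\theta_\mathrm{old}}(\va'|\vs_t)Q_{\phi_\mathrm{old}}(\vs_t,\va')$ and emphasize that $b(\vs_t)$ depends only on $\vs_t$, $\theta_\mathrm{old}$, and $\phi_\mathrm{old}$, but not on the sampling variable $\va_t$ and not on the differentiation variable $\theta$. Subtracting the two sides of Eq.~\ref{eq:variance_code} and using linearity of expectation, the claim reduces to showing
\begin{equation*}
\mathop{\E}_{\va_t\sim \pi_\theta}\!\bigl[b(\vs_t)\,\nabla_\theta\log\pi_\theta(\va_t|\vs_t)\bigr]=0.
\end{equation*}

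Next I would pull $b(\vs_t)$ out of the expectation (legitimate since it is constant in $\va_t$) and invoke the standard log-derivative identity $\pi_\theta(\va|\vs)\nabla_\theta\log\pi_\theta(\va|\vs)=\nabla_\theta\pi_\theta(\va|\vs)$. Writing the expectation as a finite sum over the discrete action set, this gives
\begin{equation*}
b(\vs_t)\sum_{\va_t\in\mathcal{A}}\pi_\theta(\va_t|\vs_t)\nabla_\theta\log\pi_\theta(\va_t|\vs_t)
=b(\vs_t)\sum_{\va_t\in\mathcal{A}}\nabla_\theta\pi_\theta(\va_t|\vs_t).
\end{equation*}
Since $\mathcal{A}$ is finite, I can interchange the sum and the gradient, and since $\sum_{\va_t}\pi_\theta(\va_t|\vs_t)=1$, the inner sum is $\nabla_\theta 1=0$, which closes the argument.

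I do not anticipate a real obstacle; the only points that require care are (i) the sum/gradient interchange, which is immediate because $|\mathcal{A}|<\infty$ and each $\pi_\theta(\va|\vs)$ is differentiable in $\theta$, and (ii) highlighting that the baseline \emph{must} be evaluated at $\theta_\mathrm{old}$ and $\phi_\mathrm{old}$ (not at the current $\theta$, $\phi$) — any $\theta$-dependence in $b(\vs_t)$ would produce an extra term $\sum_{\va_t}\pi_\theta(\va_t|\vs_t)\nabla_\theta b(\vs_t)=\nabla_\theta b(\vs_t)$ that does not vanish, breaking the identity. This observation, which I would include as a short remark after the proof, is what makes the formula a genuine variance-reducing control variate rather than a biased surrogate, and it justifies the statement in Section~\ref{sec:main_variance_reduction} that this is the single most important trick for adapting SAC to discrete action spaces.
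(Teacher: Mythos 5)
Your proposal is correct and follows essentially the same route as the paper's proof: split the expectation by linearity, pull the $\va_t$- and $\theta$-independent baseline out, and use that the expected score $\mathop{\E}_{\va_t\sim\pi_\theta}[\nabla_\theta\log\pi_\theta(\va_t|\vs_t)]$ equals $\nabla_\theta 1=0$. Your added remark that the baseline must be held at $\theta_\mathrm{old}$ (else an extra $\nabla_\theta b(\vs_t)$ term survives) is a worthwhile clarification, but it does not change the argument.
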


\begin{proof}
    See Appendix \ref{sec:variance_reduction} in the supplementary material.
\end{proof}

\subsubsection{An entropy bonus}
\label{sec:ent}
We augment the objective by adding an entropy bonus to the policy $\pi_\theta$ to discourage premature convergence to suboptimal deterministic policies.
Additionally, we replace the expectations in Eq.~\ref{eq:variance_code} with sample averages. The final gradient estimator for policy parameters $\theta$ is given by:
\begin{equation}
\label{eq:actor_loss}
    Q_{\phi_\mathrm{old}}(\vs_t,\va_t)\nabla_\theta\log\pi_\theta(\va_t|\vs_t)+\beta \nabla_\theta\mathcal{H}(\pi(\cdot |\vs_t))\text{,}
\end{equation}
where the hyperparameter $\beta$ controls the strength of exploration encouragement. We note $\va_t\sim\mathcal{\pi_\theta}$, so whenever $\mathbf{s}_t$ is used, it samples a new action $\mathbf{a}'_t\sim \pi_\theta$. It contrasts ACER or Reactor, where off-policy learning for policy updates always centers over the action in the replay buffer for $\vs_t$.

We linearly anneal $\beta$ from an initial constant value to 0. We then keep $\beta=0$ till the training ends. The linear annealing scheme plays a role in better performance, for which we defer the details to the experiment section.

%Finally,  we note the entropy bonus is a widely adopted technique in RL algorithms \cite{mnih2016asynchronous, schulman2017proximal} to promote thorough exploration. Although similar in form, it differs from maximum entropy RL. Unlike the entropy bonus, which prioritizes maximizing entropy greedily at each time step, maximum entropy RL aims to maximize the entropy of the entire trajectory distribution for a given policy $\pi$ \cite{haarnoja2018soft}.

\begin{figure*}
  \centering
  \includegraphics[scale=.42]{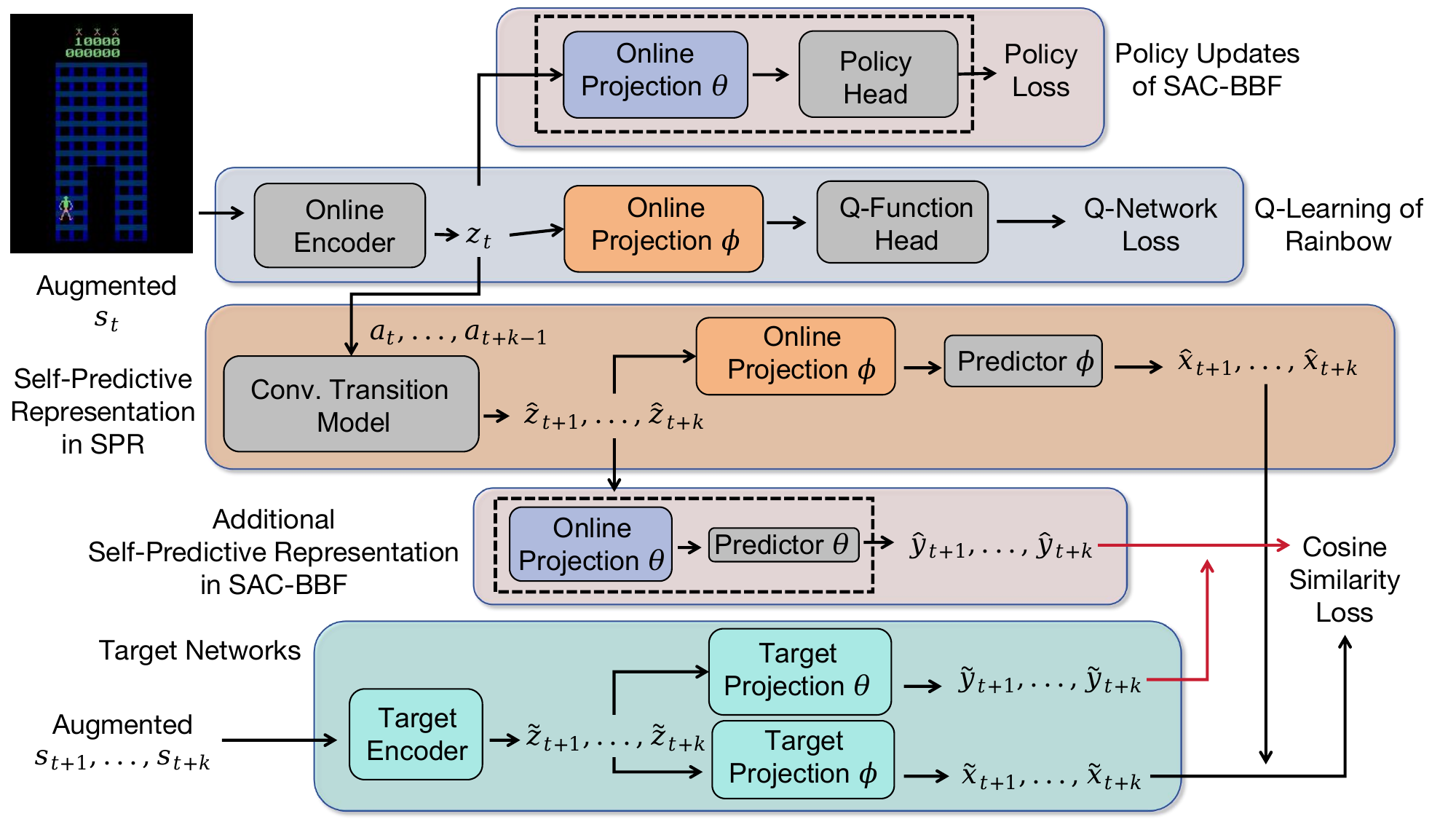}
  \caption{\textbf{Architecture of SAC-BBF.} Modules within dashed boxes represent additions introduced by SAC-BBF. In this architecture, the target modules typically correspond to exponentially moving average (EMA) versions of the online counterparts. The encoders used are Impala-CNN \cite{espeholt2018impala}, with each layer's width increased by a factor of four. Regarding the input of actions into the ``conv.~transition model,'' each action is encoded as a one-hot vector and then broadcasted to every location of the convolutional output from the encoder. The remaining modules in the architecture consist of linear layers.}
  % However, this EMA relationship is periodically disrupted due to independent periodic resets \cite{nikishin2022primacy,d2022sample} performed by the target modules.
  \label{fig:architecture}
\end{figure*}
\section{Integrating SAC with BBF}
In this section, we integrate the SAC variant discussed in the previous section with BBF. 
We depict the new architecture in Fig.~\ref{fig:architecture}. Notably, all networks from BBF remain unaltered. SAC-BBF introduces three additional modules (excluding the target module counterparts): ``online projection $\theta$,'' ``policy head,'' and ``predictor $\theta$.'' We implement these modules as simple linear layers.

\subsection{Modifying target values for training the Q-network}
In addition to the network modifications, SAC-BBF also alters the target used in the $n$-step learning (Q-learning) of BBF. The target for BBF's $n$-step learning is defined as follows:
\begin{equation}
\label{eq:targ_sac}
    \biggl(\sum^{n-1}_{k=0}\gamma^k r_{t+k+1}\biggr)+\gamma^n Q_{\phi_\mathrm{targ}}\bigl(\vs_{t+n},\argmax_{\va'}Q_\phi(\vs_{t+n},\va')\bigr)\text{,}
\end{equation}
where $r_{t+k+1}$ represents the reward obtained from the state-action pair $(\vs_{t+k+1},\va_{t+k+1})$, and $\phi_\mathrm{targ}$ denotes the corresponding set of parameters for the target Q-network. 

In SAC-BBF, we substitute $\argmax_{\va'}Q_\phi(\vs_{t+n},\va')$ in Eq.~\ref{eq:targ_sac} with $\va'\sim \pi_\theta(\cdot|\vs_{t+n})$. This replacement aligns with the Bellman backup operator $\mathcal{T}^\pi$ defined in Eq.~\ref{eq:bellman_backup}.
% It is important to highlight that $\va'$ is sampled from the current policy and \emph{not} from the replay buffer.

\subsection{Incorporating additional terms in the prediction loss}
% In BBF, the future prediction loss for self-predictive representations (SPR) only considers the cosine similarities between the predicted and observed representations, as described by the following loss:
% \begin{equation}
%     -\frac{1}{k}\sum^{k}_{j=1}\Bigl(\frac{\hat{x}_{t+j}}{\|\hat{x}_{t+j}\|_2}\Bigr)^T\Bigl(\frac{\tilde{x}_{t+j}}{\|\tilde{x}_{t+j}\|_2}\Bigr)\text{,}
% \end{equation}
% where $\hat{x}_{t+j}$ and $\tilde{x}_{t+j}$ refer to the corresponding vectors in Fig.~\ref{fig:architecture}.

%In contrast, 
SAC-BBF introduces a projection layer for the policy $\pi_\theta$, which prompts an extension of the self-predictive representations (SPR) loss to this layer. The modified SPR loss in SAC-BBF is defined as follows:
\begin{equation}
    -\frac{1}{2k}\sum_{\multimath{0<j\le k\\v\in\{x,y\}}}\Bigl(\frac{\hat{v}_{t+j}}{\|\hat{v}_{t+j}\|_2}\Bigr)^T\Bigl(\frac{\tilde{v}_{t+j}}{\|\tilde{v}_{t+j}\|_2}\Bigr)\text{,}
\end{equation} 
where $\hat{x}_{t+j},\tilde{x}_{t+j},\hat{y}_{t+j},\tilde{y}_{t+j}$ correspond to the vectors depicted in Fig.~\ref{fig:architecture}.

\subsection{Implementing a new policy loss}
In SAC-BBF, we use Eq.~\ref{eq:actor_loss} for updating the policy head. We note the learning process in SAC-BBF differs from soft Q-learning \cite{haarnoja2017reinforcement}, where the policy network only acts as an approximate sampler from the soft Q-function.

\section{Experiments}
We build the implementation of SAC-BBF over that of BBF. To ensure a fair comparison, we maintain consistency with BBF in all hyperparameters and training configurations whenever applicable. We set $F=\text{40K}$ and the initial value of $\beta$ as 0.01.

We carry out a series of experiments focusing on the following aspects:
\begin{enumerate}
    \item Investigating the role of variance reduction, as discussed in Sec.~\ref{sec:main_variance_reduction}, in the effective functioning of an agent.
    \item Assessing the effectiveness of annealing $\beta$ and examining the impact of employing a sampling strategy during evaluation.
    \item Presenting the results of SAC-BBF, highlighting its ability to achieve new benchmark IQM results.
    \item Exploring miscellaneous factors such as training and inference times compared to BBF.
\end{enumerate}

For all variants of SAC-BBF, we obtain the results through 10 independent runs and evaluate them over 100 episodes upon completion of training.

%#### - floats start
%##################################################################################################
\begin{algorithm}[t]
\caption{The code for randomly sampling the five environments}
\label{alg:code}
\definecolor{codeblue}{rgb}{0.25,0.5,0.5}
\definecolor{codekw}{rgb}{0.85, 0.18, 0.50}
\lstset{
  backgroundcolor=\color{white},
  basicstyle=\fontsize{7.5pt}{7.5pt}\ttfamily\selectfont,
  columns=fullflexible,
  breaklines=true,
  captionpos=b,
  commentstyle=\fontsize{7.5pt}{7.5pt}\color{codeblue},
  keywordstyle=\fontsize{7.5pt}{7.5pt}\color{codekw},
}
\begin{lstlisting}[language=python]
import numpy as np
    
def randomly_5(games, seed):
    np.random.seed(seed)
    games = np.asarray(games)
    np.random.shuffle(games)
    return games[:5]
\end{lstlisting}
\end{algorithm}
%##################################################################################################
%#### - floats end
\subsection{Selecting subsets of environments for ablation studies}
For the ablation studies, we prioritize carbon reduction. We thus restrict the experiments to various randomly selected subsets of 5 games from the complete suite of 26 games in the Atari 100K benchmark. To ensure randomness in selecting these subsets, you can use the Python function provided in Algorithm \ref{alg:code}. We state the seeds for each experiment at the beginning of the following subsections.

\begin{table}
\caption{\textbf{Comparison of SAC-BBF w.o.~variance reduction and SAC-BBF.} Average scores for these games are listed. The human-normalized IQM and other statistics over the five randomly selected environments. The values of these statistics, therefore, differ from those calculated over the full suite of 26 environments in Atari 100K.}
\label{tab:variance_reduction}
%\vskip 0.15in
\begin{center}
\begin{small}
%\begin{sc}
\resizebox{.8\columnwidth}{!}{%
\begin{tabular}{rrr}
\hline
\hline
%\toprule
Game &
\begin{tabular}{r}
     SAC-BBF w.o. \\
     variance reduction 
\end{tabular}
& SAC-BBF \\
\hline
KungFuMaster & 886.6 & \textbf{17746.9} \\
Krull & 0.07 & \textbf{7884.82} \\
%\hline
Frostbite & 58.77 & \textbf{2169.26} \\
RoadRunner & 729.6 & \textbf{24165.6} \\
Jamesbond & 60.9 & \textbf{1202.7} \\
\hline
IQM ($\uparrow$) & -0.008 & \textbf{2.493} \\
Optimality Gap ($\downarrow$) & 1.252 & \textbf{0.167} \\
Median ($\uparrow$) & 0.027 & \textbf{3.083} \\
Mean ($\uparrow$) & -0.252 & \textbf{2.906} \\
\hline
%\bottomrule
\end{tabular}
}
%\end{sc}
\end{small}
\end{center}
\vskip -0.1in
\end{table}
\begin{table}
\caption{\textbf{Comparing variants with constant $\beta$ during training and random action selection during evaluation.}}
\label{tab:anneal_random}
%\vskip 0.15in
\begin{center}
\begin{small}
%\begin{sc}
\resizebox{\columnwidth}{!}{%
\begin{tabular}{rrrrr}
\hline
\hline
%\toprule
Game & Human &
\begin{tabular}{r}
     $\beta=0.01$ \\
     Greedy in eval.
\end{tabular} &
\begin{tabular}{r}
     $\beta=0.01$ \\
     Sampling in eval.
\end{tabular} &
SAC-BBF \\
\hline
Seaquest & 42054.7 & \textbf{1192} & 1154.06 & 1044.3 \\
Alien & 7127.7 & \textbf{1178.14} & 1017.58 & 1158.44 \\
%\hline
CrazyClimber & 35829.4 & 57980 & 76177.5 & \textbf{84932.6} \\
Pong & 14.6 & 13.756 & 13.555 & \textbf{15.549} \\
Kangaroo & 3035.0  & 2870 & 2481.2 & \textbf{5288.6} \\
\hline
IQM ($\uparrow$)  & 1.000 & 0.470 & 0.583 & \textbf{0.750} \\
Optimality Gap ($\downarrow$) & 0.000 & 0.509 & 0.461 & \textbf{0.425} \\
Median ($\uparrow$) & 1.000 & 0.944 & 0.814 & \textbf{1.026} \\
Mean ($\uparrow$) & 1.000 & 0.793 & 0.907 & \textbf{1.180} \\
\hline
%\bottomrule
\end{tabular}
}
%\end{sc}
\end{small}
\end{center}
\vskip -0.1in
\end{table}

\begin{table*}
\caption{\textbf{Scores and aggregate metrics for BBF and competitive agents across the 26 Atari 100K games.} The scores are averaged over 5 seeds for SAC-Discrete, 30 seeds per game for SR-SPR, 3 for EfficientZero, 50 for BBF with RR 8, 14 for BBF with $=2$, and 10 for SAC-BBF with RR 2. Scores of SAC-Discrete are from \cite{christodoulou2019soft}. Statistics on SR-SPR, EfficientZero, and BBF with RR 8 are from \cite{schwarzer2023bigger}. The statistics within parentheses for BBF with RR 8, as well as the results for BBF with RR 2, are calculated based on the publicly available scores provided in the official repository of \cite{schwarzer2023bigger}. For the publicly available scores for BBF with RR 8, we note the numbers of independent runs vary in values in 52, 56, or 60. For consistency with the 50 seeds in the original paper of BBF, for each game, we randomly permute all its scores and take the first 50 runs. IQM of SR-SPR is fixed to 0.632 as in \cite{d2022sample}, rather than 0.631 in \cite{schwarzer2023bigger}, which is likely a typo.}
\label{tab:Atari_100K}
%\vskip 0.15in
\begin{center}
%\begin{small}
%\begin{sc}
\resizebox{.95\textwidth}{!}{%
\begin{tabular}{r|rrrrrrrr}
%\toprule
\hline
\hline
Game & Random & Human & SAC-Discrete &
SR-SPR
&
EfficientZero
&
\begin{tabular}{r}
    BBF \\
    RR2 
\end{tabular}
&
\begin{tabular}{r}
    BBF \\
    RR8 
\end{tabular}
&
\begin{tabular}{r}
    SAC-BBF \\
    RR2 
\end{tabular} \\
\hline
%\midrule
Alien & 227.8 & 7127.7 & 216.9 & 1107.8 & 808.5 & 1121.714	& \textbf{1173.2} & 1158.44 \\
Amidar & 5.8 & 1719.5 & 7.9 & 203.4 & 148.6 & 236.609	& \textbf{244.6} & 211.698 \\
Assault & 222.4 & 742.0 & 350.0 & 1088.9 & 1263.1 & 2004.509	& \textbf{2098.5} & 1846.01 \\
Asterix & 210.0 & 8503.3 & 272.0 & 903.1 & \textbf{25557.8} & 3169.785	& 3946.1 & 5641.45 \\
BankHeist & 14.2 & 753.1 & - & 531.7 & 351.0 & 768.835	& 732.9 & \textbf{866.61} \\
%\hline
BattleZone & 2360.0 & 37187.5 & 4386.7 & 17671.0 & 13871.2 & 23681.428 & \textbf{24459.8} & 21961\\
Boxing & 0.1 & 12.1 & - & 45.8 &  52.7 & 77.362 & \textbf{85.8} & 84.097 \\
Breakout & 1.7 & 30.5 & 0.7 & 25.5 & \textbf{414.1} & 331.07 & 370.6& 327.044 \\
ChopperCommand & 811.0 & 7387.8 & - & 2362.1 & 1117.3 & 4251.571 & 7549.3 & \textbf{8825.6} \\
CrazyClimber & 10780.5 & 35829.4 & 3668.7 & 45544.1 & 83940.2 & 60864.5	& 58431.8 & \textbf{84932.6} \\
%\hline
DemonAttack & 152.1 & 1971.0 & - & 2814.4 & 13003.9 & 18298.36 & 13341.4 & \textbf{19436.53} \\
Freeway & 0.0 & 29.6 & 4.4 & 25.4 & 21.8 & 23.125	& \textbf{25.5} & 16.456 \\
Frostbite & 65.2 & 4334.7 & 59.4 & \textbf{2584.8} & 296.3 & 2023.078 & 2384.8 & 2169.26 \\
Gopher & 257.6 & 2412.5 & - & 712.4 & \textbf{3260.3} & 1209.414 & 1331.2 & 1203.6 \\
Hero & 1027.0 & 30826.4 & - & 8524.0 & \textbf{9315.9} & 5741.821 & 7818.6 & 6958.27 \\
%\hline
Jamesbond & 29.0 & 302.8 & 68.3 & 389.1 & 517.0 & 1124.642	& 1129.6 & \textbf{1202.7} \\
Kangaroo & 52.0 & 3035.0 & 29.3 & 3631.7 & 724.1 & 5032.071	& \textbf{6614.7} & 5288.6 \\
Krull & 1598.0 & 2665.5 & - & 5911.8 & 5663.3 & 8069.842	& \textbf{8223.4} & 7884.82 \\
KungFuMaster & 258.5 & 22736.3 & - & 18649.4 & \textbf{30944.8} & 16616.857 & 18991.7	& 17746.9 \\
MsPacman & 307.3 & 6951.6 & 690.9 & 1574.1 & 1281.2 & \textbf{2217.842} & 2008.3 & 1922.41 \\
%\hline
Pong & -20.7 & 14.6 & -20.98 & 2.9 & \textbf{20.1} & 13.698 & 16.7 & 15.549 \\
PrivateEye & 24.9 & 69571.3 & - & \textbf{97.9} & 96.7 & 39.071 & 40.5 & 59.582 \\
Qbert & 163.9 & 13455.0 & 280.5 & 4044.1 & \textbf{14448.5} & 3245.339 & 4447.1 & 4234 \\
RoadRunner & 11.5 & 7845.0 & 305.3 & 13463.4 & 17751.3 & 26419 & \textbf{33426.8} & 24165.6 \\
Seaquest & 68.4 & 42054.7 & 211.6 & 819.0 & 1100.2 & 988.628	& \textbf{1232.5} & 1044.3 \\
UpNDown & 533.4 & 11693.2 & 250.7 & \textbf{112450.3} & 17264.2 & 15122.685 & 12101.7 & 34848.44 \\
\hline
Games $>$ Human & 0 & 0 & - & 9 & \textbf{14} & 11 & 12 & 13 \\
IQM ($\uparrow$) & 0.000 & 1.000 & - & 0.632 & 1.020 & 0.94	& 1.045 (1.035) & \textbf{1.088} \\
Optimality Gap ($\downarrow$) & 1.000 & 0.000 & - & 0.433 & 0.371 & 0.376 & 0.344 (\textbf{0.341}) & 0.359 \\
Median ($\uparrow$) & 0.000 & 1.000 & - & 0.685 & \textbf{1.116} & 0.754 & 0.917 (0.883) & 0.902 \\
Mean ($\uparrow$) & 0.000 & 1.000 & - & 1.272 & 1.945 & 2.175 & 2.247	(2.247) & \textbf{2.345} \\
\hline
%\bottomrule
\end{tabular}%
}
%\end{sc}
%\end{small}
\end{center}
\vskip -0.1in
\end{table*}
\begin{figure*}
  \centering
  \includegraphics[scale=.46]{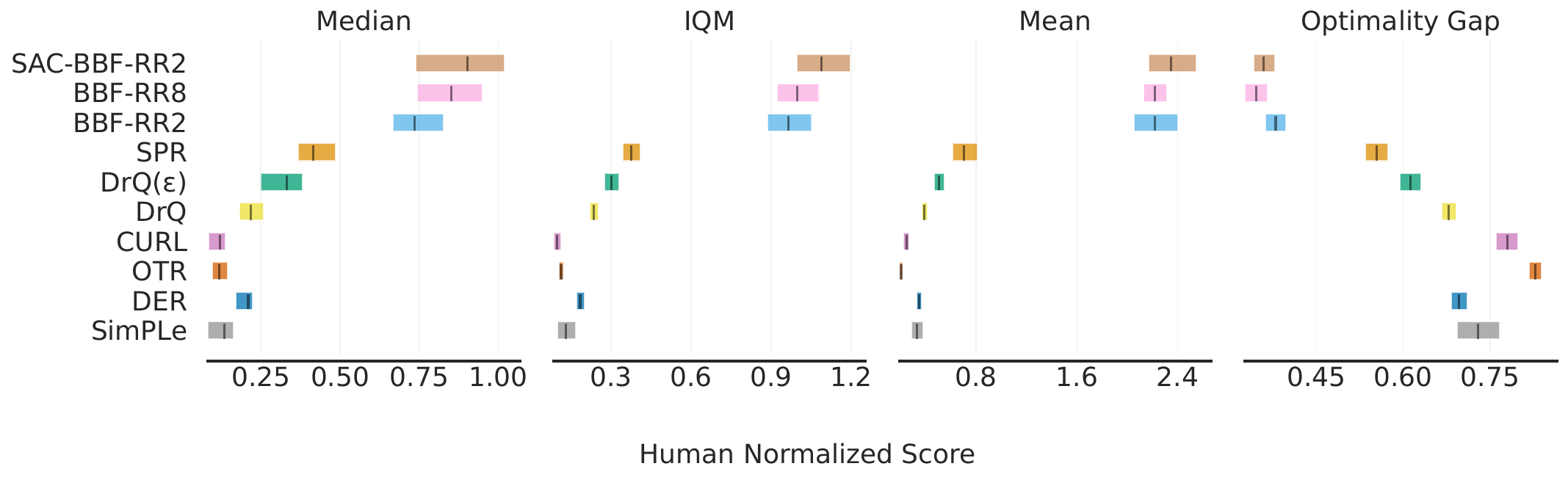}
  \caption{\textbf{Aggregate metrics with 95\% stratified bootstrap CIs for representatives of RL algorithms within the Atari 100K benchmark:} The results from SimPLe to SPR represent the default metrics provided in \pycode{rliable}. The data for BBF-RR2 and BBF-RR8 are from the official repository of \cite{schwarzer2023bigger}. The scripts in \pycode{rliable} truncate only the first ten runs from all independent runs for each game. The statistics thus may differ from those presented in Table \ref{tab:Atari_100K} for BBF-RR2 and BBF-RR8.}
  \label{fig:stats}
\end{figure*}

\subsection{The importance of variance reduction}
We experiment with a \pycode{seed} value of 3 in Algorithm \ref{alg:code}. The results are listed in Table \ref{tab:variance_reduction}. It is worth emphasizing that SAC-BBF without variance reduction differs from SAC-BBF in just \emph{one} line of code in the implementation.

Table \ref{tab:variance_reduction} demonstrates the impact of this single line of code. Without variance reduction, SAC-BBF shows a negative IQM in the tested environments, indicating that the agents might perform worse than a uniform random strategy. Previous attempts at generalizing SAC to discrete domains all missed this simple technique.

\subsection{Annealing $\beta$ and using a sampling strategy during evaluation}
We experiment with a \pycode{seed} value of 2 in Algorithm \ref{alg:code}. The results are listed in Table \ref{tab:anneal_random}.

\noindent\textbf{Annealing $\beta$:} The results in Table \ref{tab:anneal_random} indicate that annealing $\beta$ yields better results than using a constant value, as evidenced by the aggregate metrics such as IQM. In Seaquest and Alien, these variants produce comparable results.

\noindent\textbf{The impact of using a sampling strategy for evaluation:} 
As indicated in Table \ref{tab:anneal_random}, using sampling strategies generally leads to better results in terms of aggregate metrics, except for the ``Median'' metric. Regarding average scores, these two strategies are comparable in Seaquest, Alien, and Pong. Sampling strategies outperform greedy ones in CrazyClimber but underperform in Kangaroo.
After observing the raw scores, we find that the higher average score of greedy strategies in Kangaroo is due to a single score of 10400. The IQM metric is robust against outlier scores by discarding the bottom and top 25\%.
While these comparisons do not provide a definitive conclusion, they do indicate that using sampling strategies does not negatively impact performance.

We did not conduct experiments on using sampling or greedy strategies during the evaluation for SAC-BBF with annealing $\beta$. 
We anticipate using sampling strategies or not for this case would not significantly affect performance, as SAC-BBF runs with $\beta=0$ for the last $F=\mathrm{40K}$ training updates. So, the final strategy would not have very high entropy.

\subsection{Scores and aggregate metrics for SAC-BBF across the 26 Atari 100K games}
We present the results of SAC-BBF with RR 2 for the complete suite of Atari 100K benchmarks in Table \ref{tab:Atari_100K}. To provide a more straightforward visualization, in Fig.~\ref{fig:stats}, we utilize the open-source library \pycode{rliable} from \cite{agarwal2021deep} to illustrate these aggregate metrics with 95\% stratified bootstrap confidence intervals (CIs).

\noindent\textbf{BBF vs.~SAC-BBF with RR 2:} As shown in Table \ref{tab:Atari_100K}, SAC-BBF surpasses BBF across all aggregate metrics when restricted to only RR 2, which illustrates integrating SAC's generalization helps improve the learning of BBF agents.

\noindent\textbf{Comparison with BBF (RR 8):} By resetting the parameters fully or partially, RL agents exhibit RR scaling capabilities. In Table \ref{tab:Atari_100K}, we observe the improvement by comparing the IQMs of BBF with RR 8 and BBF with RR 2. Similar improvements exist in SR-SPR \cite{d2022sample}, where the IQM of SR-SPR starts at 0.444 for RR 2, improves to 0.589 for RR 8, and even reaches 0.632 for RR 16. Nonetheless, even with RR 8, which indicates four times the training time compared to RR 2, BBF exhibits lower IQM compared to SAC-BBF with RR 2.
This further underscores the efficacy of the SAC modules within the SAC-BBF framework.

We also observe from Table \ref{tab:Atari_100K} that BBF with RR 8 achieves the highest average scores in 10 environments, while this number is only 5 for SAC-BBF. Additionally, when considering the ``Optimality Gap'' metric, BBF with RR 8 outperforms SAC-BBF. However, when comparing the ``Median'' metric, BBF with RR 8 shows variability, as indicated by the ``Median'' statistic within one parenthesis in Table \ref{tab:Atari_100K} or as shown in Fig.~\ref{fig:stats}. These findings indicate that agents trained using BBF can still be highly competitive with increased RR. Nevertheless, we next demonstrate that increasing RR also further improves SAC-BBF.

\subsection{Comparison results on inference and training times}
We run the following experiment on a single RTX 4090 GPU with 24GB of memory.
The implementation of SAC-BBF follows the JAX implementation of BBF. Besides, we modify JAX's default GPU memory allocation strategy by \pycode{os.environ["XLA_PYTHON_CLIENT_MEM_FRACTION"] = "1."}.

\noindent\textbf{Inference time:} Despite incorporating additional modules, SAC-BBF demonstrates shorter inference time, as shown in Table \ref{tab:time}. During inference, SAC-BBF agents rely solely on the policy network. The ``policy head'' module depicted in Fig.~1 consists of a linear layer with an output dimension of $|\mathcal{A}|$. In contrast, BBF employs distributional RL, resulting in a ``Q-function head'' with an output dimension of $N \times |\mathcal{A}|$, where $N$ represents the number of atoms.

\noindent\textbf{Training time:} For RR 2, the introduction of additional modules in SAC-BBF only slightly increases the training time for training one agent in the ChopperCommand environment (the default environment in the official repository of BBF). The time difference is approximately 15 minutes. With an increased RR from 2 to 4, the number of training updates also doubles, resulting in a time difference of 30 minutes for RR 4. We note SAC-BBF-RR2 requires less than one-third of the training time compared to BBF-RR8.

\begin{table}
\caption{\textbf{Comparison of inference and training times between BBF and SAC-BBF.} The inference time is measured by repeatedly running the feed-forward process 3200 times with a batch size of 1.}
\label{tab:time}
%\vskip 0.15in
%\begin{center}
%\begin{small}
%\begin{sc}
\resizebox{\columnwidth}{!}{%
\begin{tabular}{r|rrrrrr}
\hline
\hline
%\toprule
 &
\begin{tabular}{r}
     BBF \\
     RR2
\end{tabular}
&
\begin{tabular}{r}
     SAC-BBF \\
     RR2 
\end{tabular}
&
\begin{tabular}{r}
     BBF \\
     RR4 
\end{tabular}
&
\begin{tabular}{r}
     SAC-BBF \\
     RR4 
\end{tabular} &
\begin{tabular}{r}
     BBF \\
     RR8 
\end{tabular} &
\begin{tabular}{r}
     SAC-BBF \\
     RR8 
\end{tabular} \\
\hline
Inference & 2.034 sec & 1.938 sec & - & - & - & - \\
Training & 92 min & 106 min & 177 min & 207 min & 371 min & 412 min \\
\hline
%\bottomrule
\end{tabular}
}
%\end{sc}
%\end{small}
%\end{center}
%\vskip -0.1in
\end{table}

\section{Conclusion}
In this paper, we have explored the application of SAC in the context of discrete action spaces. By providing rigorous theoretical proofs, we present a discrete variant of SAC that works in standard maximum reward RL. It enables the integration of SAC with the state-of-the-art sample-efficient model-free algorithm BBF.
The resulting SAC-BBF is the \emph{only} model-free sample-efficient RL algorithm that introduces explicit policy heads into the Rainbow backbone.
Experimental results demonstrate the promising performance of the integration. With RR 2, the algorithm SAC-BBF achieves the highest IQM of 1.088. Additionally, SAC-BBF exhibits replay-ratio scaling capabilities, indicating the possibility of even better results by increasing replay ratios as in BBF.
We believe that SAC-BBF contributes to advancing the research on considering separate policy networks for model-free sample-efficient RL.

%\subsection{Inference time compared with BBF}
%Despite incorporating additional modules, SAC-BBF demonstrates shorter inference times compared to BBF. During inference, SAC-BBF agents rely solely on the policy network. The tensor flow involves passing through the ``online projection $\theta$'' and ``policy head'' modules depicted in Fig.~\ref{fig:architecture}. The ``policy head'' consists of a linear layer with an output dimension of $|\mathcal{A}|$. In contrast, BBF employs distributional RL, resulting in a ``Q-function head'' with an output dimension of $N \times |\mathcal{A}|$, where $N$ represents the number of atoms. Moreover, BBF utilizes the dueling architecture \cite{wang2016dueling}, adding further complexity to the Q-network compared to the policy network of SAC-BBF.

% lezhang.thu - end
%\clearpage

%\input{sec/1_intro}
%\input{sec/2_formatting}
%\input{sec/3_finalcopy}
{
    \small
    \bibliographystyle{ieeenat_fullname}
    \bibliography{main}
}
\clearpage
\setcounter{page}{1}
\counterwithin{table}{section}
\numberwithin{equation}{section}
\maketitlesupplementary
\appendix

\section{Action selection}

\subsection{Action selection in BBF}
\noindent \textbf{Utilizing target networks for action selection:} 
BBF utilizes Rainbow as the underlying RL approach.
However, a distinction arises when generating transitions stored in the replay buffer: Rainbow relies on the \emph{online} parameters for action selection, whereas BBF employs the \emph{target} parameters for this purpose.
The BBF paper highlights the use of target networks, both for setting the target values during training (see Sec.~5.1) and for action selection, emphasizing its ``surprising importance.'' 

\noindent \textbf{Action selection in training:} 
In Rainbow, without Noisy Nets, it uses $\epsilon$-greedy strategies for exploration, where $\epsilon$ typically starts at 1, representing uniform random action selection, and gradually decreases to a much smaller final value, such as 0.01. In Rainbow, $\epsilon$ remains strictly greater than 0 to ensure exploration.

In contrast, BBF agents exhibit more aggressive behavior. In BBF, $\epsilon$ starts at 1. Then, it swiftly diminishes to 0 within just 4K steps of interactions. 4K steps constitute only a minor portion of the total allowance of 100K steps. So the question is: How do the agents in BBF maintain exploration after 4K steps, given that $\epsilon$ is 0, indicating purely greedy behavior?

We speculate that BBF maintains a certain level of exploration through two mechanisms: 1.~Action selection relies on target networks, which effectively act as an ensemble of previous online agents. 2.~Periodic resets of target parameters inject additional noises for action selection.

\noindent \textbf{Action selection in evaluation:} During evaluation, BBF follows the approach of Rainbow by setting $\epsilon$ to 0.001 for action selection, indicating that it does not employ a purely greedy strategy.

\subsection{Action selection in SAC-BBF}
\noindent \textbf{Action selection in training:} In SAC-BBF, partly to ensure a fair comparison, we utilize the same approach of employing target networks for action selection. Specifically, during interactions with the environment, SAC-BBF samples from $\pi_{\theta_\mathrm{targ}}(\cdot|\vs)$ for the received state $\vs$.

%However, as an actor-critic algorithm, SAC-BBF utilizes $\pi_{\theta_\mathrm{targ}}$ for action selection. It samples $\va\sim \pi_{\theta_\mathrm{targ}}(\cdot|\vs)$ for the state $\vs$ received from the environment during interactions.

\noindent \textbf{Action selection in evaluation:} 
During evaluation, instead of using $\argmax_{\va'}\pi_{\theta_\mathrm{targ}}(\va'|\vs)$, SAC-BBF continues to sample from $\pi_{\theta_\mathrm{targ}}(\cdot|\vs)$. We note BBF is not purely greedy during evaluation either.
Experimental results across multiple environments indicate 
a sampling strategy during evaluation gives slightly superior performance.

\noindent \textbf{A note on the strength $\beta$ of entropy regularization:} In Sec.~4.3, we have already discussed the annealing process of $\beta$. By decreasing the value of $\beta$, we encourage the development of a more deterministic policy, akin to setting $\epsilon$ to zero in the $\epsilon$-greedy strategy of BBF. As SAC-BBF maintains a sampling strategy for evaluation, we prefer a more deterministic policy as the training phase nears its conclusion. To achieve this, we train SAC-BBF agents with $\beta=0$ for the final $F$ training steps, where $F$ is a hyperparameter kept constant for different RRs.

%Furthermore, during the evaluation phase of SAC-BBF, action selection follows a sampling strategy. As a result, there is a preference for a more deterministic policy as the training phase approaches its conclusion. To achieve this, SAC-BBF agents are trained with a zero $\beta$ value for the final $F$ training steps. Here, $F$ represents a fixed hyper-parameter that remains consistent across different RRs.

\section{Additional experimental results}

\begin{table}
\caption{\textbf{Comparison of improved results by increasing RR.} The human-normalized IQM and other statistics are over the five randomly selected environments. To compute the aggregate statistics, we use the publicly available scores for BBF. Therefore, the average scores for BBF-RR8 may vary from those presented in Table 2.}
\label{tab:sac-bbf-rr4}
%\vskip 0.15in
%\begin{center}
%\begin{small}
%\begin{sc}
\resizebox{\columnwidth}{!}{%
\begin{tabular}{rrrrr}
\hline
\hline
%\toprule
Game &
\begin{tabular}{r}
     BBF \\
     RR2
\end{tabular}
&
\begin{tabular}{r}
     BBF \\
     RR8 
\end{tabular}
&
\begin{tabular}{r}
     SAC-BBF \\
     RR2 
\end{tabular}
&
\begin{tabular}{r}
     SAC-BBF \\
     RR4 
\end{tabular} \\
\hline
KungFuMaster & 16616.857 & 17697.4 & 17746.9 & \textbf{20456.9} \\
Gopher & 1209.414 & \textbf{1407.387} & 1203.6 & 1320.08 \\
%\hline
Krull & 8069.842 & 8383.532 & 7884.82 & \textbf{8495.41} \\
Asterix & 3169.785 & 4106.56 & 5641.45 & \textbf{7558.45} \\
Qbert & 3245.339 & 4318.82 & 4234 & \textbf{4777.5} \\
\hline
IQM ($\uparrow$) & 0.498 & 0.584 & 0.600 & \textbf{0.747} \\
Optimality Gap ($\downarrow$) & 0.453 & 0.397 & 0.390 & \textbf{0.316} \\
Median ($\uparrow$) & 0.441 & 0.533 & 0.654 & \textbf{0.886} \\
Mean ($\uparrow$) & 1.564 & 1.689 & 1.613 & \textbf{1.817} \\
\hline
%\bottomrule
\end{tabular}
}
%\end{sc}
%\end{small}
%\end{center}
%\vskip -0.1in
\end{table}

\subsection{Improved results by increasing RR for SAC-BBF}
We conduct experiments using a \pycode{seed} value 1 for randomly selecting the testing environments. The results are listed in Table \ref{tab:sac-bbf-rr4}. The findings in Table \ref{tab:sac-bbf-rr4} validate the RR scaling capabilities of SAC-BBF agents. SAC-BBF-RR4 achieves the best results on all aggregate metrics for the five randomly chosen environments. On the other hand, when the RR is 2, SAC-BBF outperforms BBF-RR8 in terms of ``IQM,'' ``Optimality Gap,'' and ``Median,'' but falls behind in ``Mean.'' Nevertheless, SAC-BBF-RR2 still outperforms BBF-RR2 across all metrics.
Lastly, the training time of SAC-BBF-RR4 is still shorter than that of BBF-RR8, as indicated in the subsection followed.

\section{Proofs}
We rigorously follow the original proofs in SAC, so one can easily verify the correctness.

\subsection{Proof of Lemma \ref{lem:improve}}
\label{sec:proof_lemma_improve}

\begin{proof}
    Let $\pi_\mathrm{old}\in \Pi$ and let $Q^{\pi_\mathrm{old}}$ and $V^{\pi_\mathrm{old}}$ be the corresponding state-action value and state value. Since we can always choose $\pi_\mathrm{new}=\pi_\mathrm{old}\in\Pi$, the following holds:
    \begin{equation}
    \label{eq:x-inequality}
        \mathop{\E}_{\va_t\sim \pi_\mathrm{new}}[Q^{\pi_\mathrm{old}}(\vs_t,\va_t)]\ge \mathop{\E}_{\va_t\sim \pi_\mathrm{old}}[Q^{\pi_\mathrm{old}}(\vs_t,\va_t)]=V^{\pi_\mathrm{old}}(s_t)\text{.}
    \end{equation}

    Next, consider the Bellman equation:
    \begin{equation}
        \begin{split}
            &Q^{\pi_\mathrm{old}}(\vs_t,\va_t)\\
            =&r(\vs_t,\va_t)+\gamma \mathop{\E}_{\vs_{t+1}\sim p}[V^{\pi_\mathrm{old}}(\vs_{t+1})]\\
            \le & r(\vs_t,\va_t)+\gamma \mathop{\E}_{\vs_{t+1}\sim p}\left[\mathop{\E}_{\va_{t+1}\sim \pi_\mathrm{new}}Q^{\pi_\mathrm{old}}(\vs_{t+1},\va_{t+1})\right]\\
            \vdots &\\
            \le &Q^{\pi_\mathrm{new}}(\vs_t,\va_t)\text{,}
        \end{split}
    \end{equation}
    where we have repeatedly expanded $Q^{\pi_\mathrm{old}}$ on the RHS by applying the Bellman equation and the bound in Eq.~\ref{eq:x-inequality}. Convergence to $Q^{\pi_\mathrm{new}}$ follows from Lemma 1.
\end{proof}

\subsection{Proof of Theorem \ref{thm:policy_iteration}}
\label{sec:policy_iteration}

\begin{proof}
    Let $\pi_i$ be the policy in iteration $i$. By Lemma 2, the sequence $Q^{\pi_i}$ is increasing monotonically. As the reward is bounded, the value of $Q^\pi$ is bounded. Therefore, the sequence converges to some $\pi^*$. We will still need to show that $\pi^*$  is indeed optimal. At convergence, it must be the case that $\E_{\va_t\sim \pi^*}[Q^{\pi^*}(\vs_t,\va_t)]> \E_{\va_t\sim \pi}[Q^{\pi^*}(\vs_t,\va_t)]$ for all $\pi\in\Pi,\pi\neq \pi^*$. Using the same iterative argument as in the proof of Lemma 2, we get $Q^{\pi^*}(\vs_t,\va_t)>Q^\pi(\vs_t,\va_t)$ for all $(\vs_t,\va_t)\in\mathcal{S}\times\mathcal{A}$, that is, the value of any other policy in $\Pi$ is lower than that of the converged policy. Hence $\pi^*$ is optimal in $\Pi$.
\end{proof}

\subsection{Proof of Lemma \ref{lem:objective_grad}}
\label{sec:objective_grad}
\begin{proof}
\begin{equation}
\begin{split}
&\nabla_\theta \mathop{\E}_{\va_t\sim \pi_\theta}[Q_{\phi_\mathrm{old}}(\vs_t,\va_t)]\\
=&\nabla_\theta\sum_{\va_t\in\mathcal{A}}\pi_\theta(\va_t|\vs_t)Q_{\phi_\mathrm{old}}(\vs_t,\va_t)\\
=&\sum_{\va_t\in\mathcal{A}}\left[Q_{\phi_\mathrm{old}}(\vs_t,\va_t)\nabla_\theta\pi_\theta(\va_t|\vs_t)\right]\\
=&\sum_{\va_t\in\mathcal{A}}\left[\pi_\theta(\va_t|\vs_t)Q_{\phi_\mathrm{old}}(\vs_t,\va_t)\nabla_\theta\log\pi_\theta(\va_t|\vs_t)\right]\\
=&\mathop{\E}_{\va_t\sim \pi_\theta}\left[Q_{\phi_\mathrm{old}}(\vs_t,\va_t)\nabla_\theta\log\pi_\theta(\va_t|\vs_t)\right]\text{,}
\end{split}
\end{equation}
where we only show the derivation for the case of discrete action spaces. A similar derivation follows by replacing $\sum$ with $\int$, given that no reparameterization trick is used as in SAC.
\end{proof}

\subsection{Proof of Lemma \ref{lem:variance_reduction}}
\label{sec:variance_reduction}
\begin{proof}
\begin{equation}
\begin{split}
    &\mathop{\E}_{\va_t\sim \pi_\theta}\biggl[\Bigl(Q_{\phi_\mathrm{old}}(\vs_t,\va_t)-\sum_{\va'\in\mathcal{A}}\pi_{\theta_\mathrm{old}}(\va'|\vs_t)Q_{\phi_\mathrm{old}}(\vs_t,\va')\Bigr)\\   &\quad\quad\quad\quad\nabla_\theta\log\pi_\theta(\va_t|\vs_t)\biggr]\\
    =&\mathop{\E}_{\va_t\sim \pi_\theta}[Q_{\phi_\mathrm{old}}(\vs_t,\va_t)\nabla_\theta\log\pi_\theta(\va_t|\vs_t)]-\\
    &\Bigl(\sum_{\va'\in\mathcal{A}}\pi_{\theta_\mathrm{old}}(\va'|\vs_t)Q_{\phi_\mathrm{old}}(\vs_t,\va')\Bigr)\mathop{\E}_{\va_t\sim \pi_\theta}[\nabla_\theta\log\pi_\theta(\va_t|\vs_t)]\\
    =&\mathop{\E}_{\va_t\sim \pi_\theta}[Q_{\phi_\mathrm{old}}(\vs_t,\va_t)\nabla_\theta\log\pi_\theta(\va_t|\vs_t)]-\\
    &\quad\quad\quad\quad\Bigl(\sum_{\va'\in\mathcal{A}}\pi_{\theta_\mathrm{old}}(\va'|\vs_t)Q_{\phi_\mathrm{old}}(\vs_t,\va')\Bigr)\nabla_\theta 1\\
    =&\mathop{\E}_{\va_t\sim \pi_\theta}[Q_{\phi_\mathrm{old}}(\vs_t,\va_t)\nabla_\theta\log\pi_\theta(\va_t|\vs_t)]\text{.}
\end{split}
\end{equation}
We use the topmost term in the above equation as the gradient estimation used in the code implementation.
\end{proof}

% WARNING: do not forget to delete the supplementary pages from your submission 
% \input{sec/X_suppl}

\end{document}